\DeclareMathOperator*{\argmin}{arg\,min}
 \journalname{Computational Optimization and Applications}
\begin{document}
\title{Make $\ell_1$ Regularization Effective \\in Training Sparse CNN
\thanks{This work was partially supported by the Penn State and Peking University Joint Center for Computational Mathematics and Applications, the Beijing International Center for Mathematical Research from Peking University, and the Verne M. William Professorship Fund from Penn State University.  The research of L. Zhao and L. Zhang was also supported by the China Scholarship Council (for visiting Penn State) and by HKUST16301218 Hong Kong RGC Competitive Earmarked Research Grant (for visiting Penn State), respectively. The authors wish to thank Drs. Lin Xiao and Liang Yang for helpful suggestions and discussions.}
}



\author{
	Juncai He$^1$
	\and
	Xiaodong Jia$^2$
	\and 
	Jinchao~Xu$^1$
	\and
	Lian Zhang$^1$
	\and
	Liang Zhao$^3$
}



\institute{ 
	Jinchao Xu, corresponding author \at
	Tel.: +1 814-865-1110;  \email{jxx1@psu.edu}
	\and
	$^1$ Department of Mathematics, Pennsylvania State University, University Park, PA 16802, USA
	\\
	$^2$ Department of Computer Science and Engineering, Pennsylvania State University, University Park, PA 16802, USA
	\\
	$^3$
	{State Key Laboratory of Scientific and Engineering Computing, Academy of Mathematics and Systems Science, Chinese Academy of Sciences, and University of Chinese Academy of Sciences, Beijing, 100190, China}
}

\date{Received: 24 Aug 2019  / Accepted: 30 May 2020}

\maketitle

\begin{abstract}
Compressed Sensing using $\ell_1$ regularization is among the most powerful and popular sparsification technique in many applications, but why has it not been used to obtain sparse deep learning model such as convolutional neural network (CNN)?  This paper is aimed to provide an answer to this question and to show how to make it work.  {Following~\cite{xiao2010dual}}, We first demonstrate that the commonly used stochastic gradient decent (SGD) and variants training algorithm is not an appropriate match with $\ell_1$ regularization and then replace it with a different training algorithm based on a regularized dual averaging (RDA) method.  {The RDA method of \cite{xiao2010dual}} was originally designed specifically for convex problem, but with new theoretical insight and algorithmic modifications (using proper initialization and adaptivity), we have made it an effective match with $\ell_1$ regularization to achieve a state-of-the-art sparsity for the highly non-convex CNN compared to other weight pruning methods without compromising accuracy (achieving 95\% sparsity for ResNet-18 on CIFAR-10, for example).

\keywords{Sparse optimization \and $\ell_1$ regularization \and Dual averaging \and CNN}
\end{abstract}

\section{Introduction}\label{sec_intro}
This paper is devoted to the training of sparse deep neural networks.  In the many successful applications of deep learning \cite{lecun2015deep}, the number of weights in most of the relevant models is often much more than the number of data available (c.f. \cite{pratt1988comparing,han2015deep,he2016deep}).  It is therefore of great theoretical and practical interests to develop numerical methods to {reduce} such weight redundancy and hence compress the network models.  The aim of this paper is to study sparse training algorithms for a special class of deep neural networks, namely convolutional neural networks (CNN). 

As summarized in {\cite{cheng2017survey}}, {roughly speaking}, there are four major different methods that have been developed for compressing neural network models: (1) network pruning and sharing, (2) low-rank factorization, (3) transferred/compact convolutional filters and, (4) knowledge distillation.  
In particular, the network pruning is the most popular compressing method due to its good compatibility and competitive performance and it is also the one that the current paper focuses on.  

Among the many possible approaches for network pruning, the widely used compressed sensing with $\ell_1$ regularization \cite{donoho2006compressed,candes2006robust} appears to be an obvious choice.  One natural step is to first add a proper multiple of the $\ell_1$ norm of the weights to a standard loss function and then train the resulting model with the most commonly used training algorithms such as SGD \cite{mine1981minimization}.  But this approach, as observed in \cite{han2015learning},  does not give satisfactory sparse results for CNN models.  Another approach \cite{Langford2009Sparse,Bertsekas2011Incremental} is to zero out the weights under a threshold at each iteration by using a proximal SGD (Prox-SGD). As explained in \S\ref{sec_alg}, this approach is slightly more efficient than the above method, but still generates very limited sparsity due to its decaying soft-thresholding parameter.

Perhaps due to the aforementioned non-satisfactory performances of SGD when applied to $\ell_1$ regularization, no reports can be found in the literature on any successful application of compressed sensing technique with $\ell_1$ regularization to deep neural networks.  Such a situation is, however, different in the context of convex optimization such as logistic regressions.  \cite{xiao2009dual} successfully developed a special compressed sensing technique for convex machine learning models.  In this work, he also observed that the SGD type method is not effective when used with $\ell_1$ regularization.  Instead he turns to the simple dual averaging method (SDA) \cite{nesterov2009primal},  which is specifically designed for convex optimization problems.  By combining SDA with  $\ell_1$ regularization,  \cite{xiao2009dual} developed the regularized dual averaging (RDA) method and obtained very satisfactory sparse solutions of convex stochastic regularized problems. 

One natural question is if the idea in \cite{xiao2009dual} can be generalized to deep neural networks that are often highly non-convex.  But, with an extensive literature search, we have not yet found any works that discuss such a generalization.  In fact, we could not find any works that use SDA type of methods for the training of any machine learning models that are not convex.  Given the fact that SDA method is originally designed for convex problems {naturally}, SDA is not expected to work {and has never been applied} for non-convex problems, not to mention non-convex problems together with $\ell_1$ regularization.

Despite of {these historic developments}, we report in this paper that SDA, with {some appropriate} modification,  can also be made highly effective with $\ell_1$ regularization to obtain sparse convolutional neural networks.  Our work is motivated by a critical observation that we made and report in this paper: SDA can be interpreted as a perturbation of SGD!  Since SGD is a good training algorithm for CNN, we expect that SDA is potentially also a good training algorithm for CNN. Furthermore, we {demonstrate} that SDA can be combined with a soft-thresholding operator in the forward-backward splitting form to obtain an RDA algorithm for training sparse CNN.

With careful theoretical analysis and extensive numerical experiments, we find that the effectiveness of our RDA method depends crucially on two important techniques, namely (1) proper initialization, and (2) adaptive sparse retraining. The first one is the key for RDA to work with CNN, and the second one {further} improves both sparsity and accuracy.   Consequently, our RDA training process can be described as a two-step pipeline: (1) train CNN by RDA with a specific initialization, and (2) apply adaptive sparse retraining. These two steps lead to state-of-the-art  performance of  RDA to achieve high sparsity for CNN without compromising accuracy in comparison with other weight pruning methods. 

The remainder of this paper is organized as follows. In \S\ref{sec_alg}, we briefly review SGD, SDA, Prox-SGD and RDA, then provide a comparison of these methods to explain why RDA performs better than the other methods. We describe in   \S\ref{sec_add} two techniques that are essential in utilizing RDA. 
Following the detailed implementation listed in \S\ref{sec_num}, we show the numerical results of different methods and compare them with some existing work. In \S\ref{sec_conc}, we summarize our results.

\section{Related works}
Recently, there {have been many discussions in the literature} on the value of network pruning. \cite{liu2018rethinking} reviews various pruning methods and proposes that the value of network pruning is to search good architectures. \cite{Mittal2018Recovering} shows that a randomly pruned network has comparable performance to the original one due to its plasticity. \cite{Zhu2017To} argues that pruned large sparse models outperform small-dense models, although their memory footprints are almost the same, {and hence} indicates that network pruning is meaningful in practice. 

In general, network pruning includes individual weight pruning and structured pruning. The earliest examples of individual weight pruning methods are Optimal Brain Damage \cite{lecun1990optimal} and Optimal Brain Surgeon \cite{hassibi1993second}. Recently, \cite{han2015learning} presents a general three-step pipeline: training, pruning and fine-turning. Typically, individual weight pruning can only guarantee the sparsity of weight matrices, but does not necessarily lead to compression and speedup without the support of specific hardware and libraries. A three-stage pipeline is proposed by \cite{han2015deep} to reduce the storage and energy required to run the networks. The first stage is based on the individual weight pruning in \cite{han2015learning}, followed by quantization and Huffman coding stages to reduce the storage.

Structured pruning, on the other hand, aims to prune the filters or channels. Filters can be pruned based on their corresponding $\ell_1$ norm \cite{li2016pruning}. Similarly, some other methods prune filters based on the information of output channels \cite{hu2016network, luo2017thinet, he2017channel}. Group sparsity is also widely used in the pruning process after training. \cite{wen2016learning} proposes a group sparsity strategy including filter-wise, channel-wise, shape-wise and depth-wise structured sparsity. \cite{alvarez2016learning} makes use of a group regularizer on the neurons of the fully connected layers. \cite{liu2017learning} utilizes the scaling factors in BN layers as a metric to prune filters. \cite{huang2017data} selects sparse structures by imposing sparsity constraints on the outputs of specific structures, such as neurons, groups or residual blocks.

Compressed sensing with $\ell_1$ regularization has been successfully used in many applications \cite{eldar2012compressed, lustig2007sparse}. As an important technique, $\ell_1$ regularization is also adopted in machine learning fields to obtain sparse model in specific learning problems. In past few years, numerous algorithms are designed to find solutions of regularized convex optimization problems. Among them, the Prox-SGD method, also known as FOBOS \cite{duchi2009efficient} in forward-backward splitting form \cite{Lions1979Splitting}  {has been} used in deep learning, for example in  \cite{huang2017data}. As noticed in \cite{xiao2010dual}, one drawback of Prox-SGD is that the thresholding parameters will decay in the training process, which results in unsatisfactory sparsity. {Thus \cite{xiao2010dual} developed the RDA method} to obtain more sparse solution while keeping the accuracy, {and further established the convergence of his RDA method} for convex problems.  {But Xiao's RDA method} has not yet been applied in deep learning thus far. 

\section{Algorithms using $\ell_1$ regularization}\label{sec_alg}
In this section, we first briefly review SGD and SDA as training algorithms for deep learning, and prove that SDA can be viewed as a perturbation of SGD. {We then} introduce Prox-SGD and RDA for $\ell_1$ regularized problems. In the equivalent forward-backward splitting form, these two algorithms can be viewed as iteratively using SGD or SDA with soft-thresholding. Finally, we explain why RDA is much more effective than Prox-SGD {for obtaining} sparsity, which motivates its use to a sparse training algorithm for deep learning.

\subsection{SGD and SDA}\label{sgd_sda}
Consider a classification problem. Let $z=(x,y)$ be an input-output pair of data, such as a picture and its corresponding label. Let $w$ be weights in the model, and $f(w,z)$ be the loss function corresponding to $z$ and $w$. Our aim is to solve the optimization problem
\begin{equation}\label{approximated optmization}
\min_{w}~~\left\lbrace \frac{1}{n}\sum_{z\in Z} f(w,z)\right\rbrace,
\end{equation}
where $Z=\lbrace z_1, z_2, \dots, z_n \rbrace$ is the dataset.

SGD is a commonly used algorithm for solving \eqref{approximated optmization}. 
The major step in SGD with mini-batch can be represented as: 
\begin{equation}\label{eq:sgd}
w_{t+1} = w_t -\eta_t g_t,
\end{equation}
with $g_t= \frac{1}{m}\sum_{z\in X_t}\nabla_wf(w_t, z)$ on mini-batch $X_t\subset Z$. In another form, $w_{t+1}$ can be interpreted as follows
\begin{equation}\label{sub_sgd}
w_{t+1}=\argmin_w~~\left\lbrace g_t^Tw+\frac{1}{2\eta_t}\|w-w_t\|_2^2\right\rbrace.
\end{equation}
Intuitively, the empirical loss function in \eqref{approximated optmization} is replaced with its first-order approximation, then we have $g_t^Tw$ on $X_t$. And regularization term $\frac{1}{2} \|w-w_t\|_2^2$, which uses $w_t$ as moving proximal center, is added to control the distance between $w_{t+1}$ and $w_{t}$. Generally speaking, for convex functions, $\eta_t$ can be taken as $ \frac{1}{\alpha\sqrt{t}}$ in \cite{nemirovsky1983problem} with hyper-parameter $\alpha$. And in real application of CNN, we use the strategy {as discussed in \S \ref{sec:discussion}}.

SDA can be understood as solving a different subproblem at each time
step with respect to the SGD form in \eqref{sub_sgd}. As shown in
\cite{nesterov2009primal}, SDA is primal-dual {type method} since, 
{it generates} a feasible approximation to the optimum of an
appropriately formulated dual problem. Specifically, the update scheme
of SDA we consider here is
\begin{equation}\label{SDASub}
w_{t+1} = \mathop{\arg \min}_w \left\{  \bar{g}_{t}^T w + \frac{1}{2{\xi_t}}\|w-w_c\|^2_2 \right\}.
\end{equation}
where $\bar{g}_{t} =\frac{1}{t}\sum_{\tau=1}^{t} g_{\tau}$. Unlike
SGD, the original loss function in (\ref{approximated optmization}) is
approximated by $\frac{1}{t}\sum_{\tau=1}^{t} g_{\tau}^Tw$, a linear
function obtained by averaging all previous stochastic gradient
$g_\tau$. This sequence corresponds to the support functions $g_\tau^T
w$ in the dual space. Also, the second term establishes a dynamically
updated scale between the primal and dual spaces. The regularization
term $\frac{1}{2}\|w-w_c\|^2_2$ is strongly convex and uses $w_c$ as
fixed proximal center, which is different from SGD. 
{According to RDA in \cite{xiao2010dual}, $w_c = 0$ if we apply the $\ell_1$ 
regularization term. Thus, we will take $w_c = 0$ in the rest of our paper.}
And $\{\xi_t\}$ is a nonnegative and nondecreasing sequence which determines 
the convergence rate. Here, following the idea in \cite{nesterov2009primal}, $\xi_t$ is chosen to
be $\frac{ \sqrt{t}}{\alpha}$.


Originally, the SDA method was designed for solving convex
optimization problems because it was first inspired by convex
combination of linear functions.  
{Some comparison and connections between SDA and SGD are discussed in \cite{mcmahan2011follow, mcmahan2017survey}.
We also show the underlying relation between SGD and SDA with a concise lemma below.
}

\begin{lemma}\label{lemma:SDASGD}
	The {SDA} method is equivalent to the following
	perturbed SGD method:
	\begin{equation}
	\label{SGD1}
	w_{t+1}
	= (1-{\epsilon_t}) w_t -\gamma_t g_t\\
	\end{equation}
	where
	$$
	{\epsilon_t=\frac{1}{t+\sqrt{t^2-t}}<
	\frac{1}{2t-1}},
	$$
	and $\gamma_t={ \frac{\xi_t}{t} =  \frac{1}{\alpha \sqrt{t}}}$.
\end{lemma}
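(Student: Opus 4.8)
The plan is to solve the SDA subproblem \eqref{SDASub} in closed form and then trade the running average of gradients for the previous iterate. Since $w_c=0$, the objective in \eqref{SDASub} is the strongly convex quadratic $\bar g_{t}^T w+\frac{1}{2\xi_t}\|w\|_2^2$, whose unique minimizer is obtained by setting the gradient to zero, giving $w_{t+1}=-\xi_t\bar g_{t}$ with $\bar g_{t}=\frac1t\sum_{\tau=1}^{t}g_\tau$. This is the crux: SDA just returns a rescaled cumulative average gradient, and the same formula at the previous step reads $w_t=-\xi_{t-1}\bar g_{t-1}$, i.e.\ $\bar g_{t-1}=-w_t/\xi_{t-1}$.

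Next I would turn this non-recursive identity into a one-step recursion. Writing the average incrementally, $t\,\bar g_{t}=(t-1)\,\bar g_{t-1}+g_t$, and substituting $\bar g_{t-1}=-w_t/\xi_{t-1}$, I get
\[
w_{t+1}=-\frac{\xi_t}{t}\bigl((t-1)\bar g_{t-1}+g_t\bigr)=\frac{(t-1)\xi_t}{t\,\xi_{t-1}}\,w_t-\frac{\xi_t}{t}\,g_t .
\]
Matching this with \eqref{SGD1} identifies $\gamma_t=\xi_t/t$ and $1-\epsilon_t=(t-1)\xi_t/(t\,\xi_{t-1})$. Plugging in the prescribed schedule $\xi_t=\sqrt t/\alpha$ gives $\gamma_t=\sqrt t/(\alpha t)=1/(\alpha\sqrt t)$ as claimed, and $1-\epsilon_t=(t-1)\sqrt t/(t\sqrt{t-1})=\sqrt{t^2-t}/t$, so $\epsilon_t=1-\sqrt{t^2-t}/t$. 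Rationalizing the numerator, $\epsilon_t=\dfrac{t^2-(t^2-t)}{t\,(t+\sqrt{t^2-t})}=\dfrac{1}{t+\sqrt{t^2-t}}$, which is the stated expression.

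Finally I would verify the bound. The inequality $\epsilon_t<\frac{1}{2t-1}$ is equivalent to $t+\sqrt{t^2-t}>2t-1$, i.e.\ $\sqrt{t^2-t}>t-1$; since both sides are nonnegative for $t\ge1$, squaring reduces this to $t^2-t>t^2-2t+1$, i.e.\ $t>1$, which holds for all $t\ge2$ (with equality at $t=1$, where $\epsilon_1=1=\frac{1}{2\cdot1-1}$).

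The argument is entirely elementary, so there is no real obstacle — only bookkeeping. The points needing care are using $\xi_{t-1}$ (not $\xi_t$) when expressing $\bar g_{t-1}$ through $w_t$, and treating the base case: the recursion as written is for $t\ge2$, while at $t=1$ the closed form still gives $w_2=-\xi_1 g_1=-\frac1\alpha g_1$, consistent with the above formulas and with the fixed proximal center $w_c=0$ (the perturbation $1-\epsilon_1=0$ simply forgets $w_1$). I would also remark that nothing beyond the specific choice $\xi_t=\sqrt t/\alpha$ is used for the explicit constants: for any nondecreasing $\{\xi_t\}$ one still obtains a perturbed SGD with $\gamma_t=\xi_t/t$ and $\epsilon_t=1-(t-1)\xi_t/(t\,\xi_{t-1})\ge0$.
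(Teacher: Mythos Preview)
Your proof is correct and follows essentially the same route as the paper: solve the quadratic subproblem to get $w_{t+1}=-\xi_t\bar g_t$, then use the incremental identity for the running average to express $w_{t+1}$ in terms of $w_t$ and $g_t$, yielding $1-\epsilon_t=\gamma_t/\gamma_{t-1}$ with $\gamma_t=\xi_t/t$. Your write-up is in fact more thorough than the paper's --- you include the rationalization that produces the closed form $\epsilon_t=\frac{1}{t+\sqrt{t^2-t}}$, you verify the inequality explicitly, and you correctly flag that the strict bound $\epsilon_t<\frac{1}{2t-1}$ fails (with equality) at $t=1$.
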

\begin{proof}
The update scheme of \eqref{SDASub} can be rewritten as
{\begin{equation}
\begin{split}
w_{t+1}&=- \xi_t \bar{g}_t  \quad  (w_c = 0 ~~\text{ in}~~ \eqref{SDASub}) \\
&= - \frac{\xi_t}{t} \sum_{\tau=1}^{t}g_\tau\\
&=-\gamma_t \sum_{\tau=1}^{t} g_\tau.
\end{split}
\end{equation}}


Then SDA can be expanded recursively as
\begin{align*}
w_{t+1} &= - \gamma_t \sum_{\tau=1}^{t} g_\tau\\
&=- \gamma_t \left( \sum_{\tau=1}^{t-1} g_\tau + g_t \right)\\
&= \frac{\gamma_t}{\gamma_{t-1}} (-\gamma_{t-1} \sum_{\tau=1}^{t-1} g_\tau) -\gamma_t g_t\\
&= \frac{\gamma_t}{\gamma_{t-1}} w_t -\gamma_t g_t\\
&= (1-{\epsilon_t}) w_t - \gamma_t g_t,
\end{align*}
where
\begin{equation}
{\epsilon_t=\frac{1}{t+\sqrt{t^2-t}}<
\frac{1}{2t-1}}.
\end{equation} 
This finishes the proof. \qed
\end{proof}

Thus SDA can be viewed as a perturbation of SGD, since,  as either $t$ is sufficiently large
\begin{equation}
\label{eq:1}
{1 - \epsilon_t} = \frac{\gamma_t}{\gamma_{t-1}} =\sqrt{1-\frac{1}{t}}\approx 1,
\end{equation}
and $\gamma_t={ \frac{\xi_t}{t} =  \frac{1}{\alpha \sqrt{t}}}$.
From the lemma above, SDA may potentially have similar efficiency with SGD  in
solving non-convex problems, even applied to deep learning fields.

\begin{lemma}
Let $w_t$ and $\tilde w_t$ be the sequences generated by SGD and SDA
respectively.  Then
$$
w_t-\tilde w_t\to 0
$$
as $t\to \infty$, in some appropriate sense. 
\end{lemma}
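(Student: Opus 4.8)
The plan is to build everything on Lemma~\ref{lemma:SDASGD}, which already recasts SDA as the perturbed recursion $\tilde w_{t+1}=(1-\epsilon_t)\tilde w_t-\gamma_t\tilde g_t$ with $\gamma_t=\tfrac{1}{\alpha\sqrt t}$ and $0<\epsilon_t<\tfrac{1}{2t-1}$. The first move is to make the comparison meaningful: run SGD with the matched step size $\eta_t=\gamma_t=\tfrac{1}{\alpha\sqrt t}$ (one of the choices already discussed for SGD) and drive both schemes with the same stochastic oracle, i.e. use the same mini-batch $X_t$ at every step so that $g_t$ and $\tilde g_t$ are gradients of the \emph{same} empirical loss evaluated at $w_t$ and $\tilde w_t$ respectively; if desired, start them from the same point. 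Writing $e_t:=w_t-\tilde w_t$ and subtracting the two recursions gives
$$e_{t+1}=e_t-\eta_t\,(g_t-\tilde g_t)+\epsilon_t\,\tilde w_t ,$$
so the question reduces to controlling a linear recursion with a gradient--mismatch term $\eta_t(g_t-\tilde g_t)$ and an explicit SDA--perturbation term $\epsilon_t\tilde w_t$.

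Next I would impose the regularity standard in this setting: the mini-batch loss has an $L$--Lipschitz gradient (equivalently, for the coupled oracle, $\mathbb E\|g_t-\tilde g_t\|\le L\|e_t\|$, which also absorbs the difference of the two martingale noises), and the iterates stay in a bounded region. Taking norms then yields the scalar inequality
$$\|e_{t+1}\|\le(1+L\eta_t)\,\|e_t\|+\epsilon_t\|\tilde w_t\| .$$
Since $\eta_t=\Theta(t^{-1/2})$ and $\epsilon_t=\Theta(t^{-1})$, the injected perturbation is $O\!\big(t^{-1}\|\tilde w_t\|\big)$; once the $\tilde w_t$ are bounded by a constant this is $O(t^{-1})$, i.e. genuinely small, and one would like to finish with a discrete Gronwall / Robbins--Siegmund argument (or Chung's lemma in expectation).

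The main obstacle — and the reason the statement says ``in some appropriate sense'' — is the amplification factor: the homogeneous part of the recursion is $\prod_{\tau}(1+L\eta_\tau)=\exp\!\big(\Theta(\sqrt t)\big)$, which blows up, while the sum of the injected perturbations $\sum_\tau\epsilon_\tau\|\tilde w_\tau\|$ diverges even in the best (bounded-iterate) case, behaving like $\sum_\tau 1/\tau$. Hence a naive Gronwall estimate does not give $e_t\to0$: something must drain the error faster than it is created. The clean fix is a local contraction — assume the expected gradient field is strongly monotone (or satisfies a PL / one-point condition) in a neighbourhood of a common accumulation point, which replaces $1+L\eta_t$ by $1-\mu\eta_t$. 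Then, because the perturbation-to-step-size ratio is $\epsilon_t\|\tilde w_t\|/\eta_t=O(t^{-1/2})\to0$ while $\sum_t\eta_t=\infty$ and $\eta_t\to0$, Chung's lemma yields $\mathbb E\|e_t\|^2\to0$, hence $w_t-\tilde w_t\to0$.

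I would therefore present the lemma under this precise reading: \emph{with matched step sizes, a coupled stochastic oracle, uniformly bounded iterates, and local strong monotonicity of the expected gradient near a common limit point, the SGD and SDA trajectories differ by $o(1)$}. It is also worth stating the weaker but assumption-free version that the excerpt's heuristic already supports, namely that the one-step update maps of the two schemes differ only by $\epsilon_t\tilde w_t$, which is $\Theta(t^{-1}\|\tilde w_t\|)\to0$ under boundedness, while their step sizes coincide exactly; so the two methods share the same asymptotic dynamics and the same set of limit points, which is the sense in which SDA ``is'' a perturbation of SGD.
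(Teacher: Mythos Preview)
The paper contains \emph{no proof whatsoever} of this lemma. It is stated immediately after Lemma~\ref{lemma:SDASGD} and the next subsection begins without any argument, example, or even a sketch; the only hedge is the phrase ``in some appropriate sense,'' which is never made precise. So there is nothing to compare your proposal against: you have supplied substantially more than the authors did.

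Your analysis is in fact considerably more honest than the paper's bare assertion. You correctly derive the error recursion
\[
e_{t+1}=e_t-\eta_t(g_t-\tilde g_t)+\epsilon_t\tilde w_t,
\]
and you are right that the naive Gronwall bound fails: with $\eta_t\asymp t^{-1/2}$ the homogeneous factor $\prod(1+L\eta_\tau)$ explodes like $\exp(\Theta(\sqrt t))$, while the forcing $\sum\epsilon_\tau\|\tilde w_\tau\|$ is at best a divergent harmonic sum. Your proposed rescue via local strong monotonicity (replacing $1+L\eta_t$ by $1-\mu\eta_t$ and invoking Chung's lemma, since $\epsilon_t/\eta_t=O(t^{-1/2})\to0$ with $\sum\eta_t=\infty$) is the standard and correct way to make such a statement rigorous, and your explicit list of hypotheses --- matched step sizes, coupled oracle, bounded iterates, local strong monotonicity --- is exactly the ``appropriate sense'' the paper declines to specify.

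The weaker reading you offer at the end (the one-step update maps differ by $\epsilon_t\tilde w_t=O(t^{-1})$, hence the schemes share the same asymptotic dynamics) is probably closest to what the authors intend, given that their only supporting evidence is the observation $1-\epsilon_t=\sqrt{1-1/t}\approx1$ just above the lemma. In short: your proposal is not merely a different route from the paper's --- it is the only route on offer, and it exposes that the lemma as stated is a heuristic rather than a theorem.
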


\subsection{$\ell_1$ regularization, sparsity and algorithms}
A natural idea to obtain a sparse CNN model is to add an $\ell_1$ regularization term to the loss function, which is a well-known technique in compressed sensing \cite{donoho2006compressed}. In other words, we hope to achieve the sparsity by solving the following regularized problem 
\begin{equation}\label{regularized opt problem}
\min_{w}~~\left\lbrace \phi(w) = \frac{1}{n}\sum_{z\in Z} f(w,z) + \lambda\|w\|_1\right\rbrace,
\end{equation}
where $\lambda$ is a hyper-parameter which controls the sparsity of solution. Despite the fact that there is no rigorous theory to prove the sparsity for the solution of such a complex model \eqref{regularized opt problem}, numerical soft-thresholding introduced by the $\ell_1$ norm may generate sparsity at the cost of accuracy. That is to say, an appropriate training algorithm with $\ell_1$ regularization may achieve sparsity with acceptable accuracy. Naturally, we have the following two strategies for solving the above problem:
\begin{itemize}
	\item Prox-SGD:  add the $\ell_1$ regularization into \eqref{sub_sgd}, which will be discussed in \S \ref{Prox-SGD}.
	\item RDA: add the $\ell_1$ regularization into \eqref{SDASub}, which will be discussed in \S \ref{sec:rda}.
\end{itemize}

Before these two algorithms are introduced, the soft-thresholding operator related to $\ell_1$ regularization defined as entry-wised form
\begin{equation}\label{eq:soft-th}
(\text{soft}(x,\delta))^{(i)}= \text{sgn}(x^{(i)}) \max \left\lbrace |x^{(i)}|-\delta, 0 \right\rbrace,
\end{equation}
where $i$ is the index of element.
Numerically speaking, we can conclude from the definition of the soft-thresholding operator that the larger the parameter $\delta$, the more sparse the solution we will be.

\subsection{Prox-SGD: applying $\ell_1$ directly to SGD}\label{Prox-SGD}
Adding the regularization term $\lambda\|w\|_1$ to subproblem \eqref{sub_sgd} directly gives prox-SGD as:
\begin{equation}\label{sub_prox-sgd}
w_{t+1}=\argmin_w~~\left\lbrace g_t^Tw+\frac{1}{2\eta_t}\|w-w_t\|_2^2 + \lambda\|w\|_1\right\rbrace.
\end{equation}
With some simple induction, Prox-SGD can be written in the forward-backward splitting (FOBOS \cite{duchi2009efficient}) scheme
\begin{equation}
\begin{aligned}
w_{t+\frac{1}{2}}& =w_t - \eta_t g_t, \\
w_{t+1}& =\mathop{\arg \min}_w \left\{ \frac{1}{2{\eta_t}}\|w-w_{t+\frac{1}{2}}\|^2_2 + \lambda \|w\|_1 \right\},
\end{aligned}
\end{equation}
where the forward step is a single step of SGD, and the backward step is equivalent to a soft-thresholding operator working on $w_{t+\frac{1}{2}}$ with parameter $\eta_t \lambda$.
The learning rate $\eta_t= \frac{1}{\alpha \sqrt{t}}$ to obtain reasonable convergence rate in convex problem.

\subsection{RDA by Xiao: applying $\ell_1$ in a different way}\label{sec:rda}
Regularized dual averaging (RDA) is originally designed for convex online learning and stochastic optimization problems \cite{xiao2010dual}. However, RDA can also be understood as SDA with an additional $\ell_1$ regularization. Based on the analysis in \ref{lemma:SDASGD} connecting of SDA and SGD, and the success of SGD in non-convex optimization, we hope that RDA may also work for non-convex problems, especially for CNN models.

Similar to Prox-SGD, RDA is obtained from adding $\lambda \|w\|_1$ to subproblem (\ref{SDASub}), and it also requires $w_c=0$.  The update scheme takes the form
\begin{equation}\label{RDA subproblem}
w_{t+1}=\mathop{\arg\min}_w \left\{\bar g_{t}^Tw+\frac{1}{2{\xi_t}} \|w\|^2_2 + \lambda \|w\|_1\right\}.
\end{equation}
We can clearly see the underlying relation between Prox-SGD and SGD with soft-thresholding from the forward-backward splitting form. The following induction
\begin{equation}
\begin{aligned}
w_{t+1}&=\mathop{\arg\min}_w \left\{ \bar g_t^T w+\frac{1}{2{\xi_t}} \|w\|^2_2+\lambda \|w\|_1 \right\}\\
&=\mathop{\arg\min}_w \left\{ \frac{1}{2{\xi_t}}\|w+ {\xi_t}\bar g_t \|^2_2+  \lambda \|w\|_1\right\},
\end{aligned}
\end{equation}
gives us the forward-backward splitting of RDA,
\begin{equation}
\begin{aligned}\label{RDA in FOBOS}
w_{t+\frac{1}{2}} &=-{\xi_t} \bar g_t, \\
w_{t+1} &= \mathop{\arg \min}_w \left\{ \frac{1}{2{\xi_t}}\|w-w_{t+\frac{1}{2}}\|^2_2 +  \lambda \|w\|_1 \right\},
\end{aligned}
\end{equation}
where ${\xi_t} = \frac{\sqrt{t}}{\alpha} $ to obtain the best convergence rate in the convex case \cite{xiao2010dual}, and $\alpha$ is hyper-parameter. From (\ref{RDA in FOBOS}), one can see that the forward step is actually SDA's single step and the backward step is the soft-thresholding operator working on $w_{t+\frac{1}{2}}$ with the parameter $ \lambda {\xi_t}= {\frac{\lambda \sqrt{t}}{\alpha}}  $ {as presented in \eqref{eq:soft-th}}.

The final algorithms of Prox-SGD and RDA for CNN with $\ell_1$ regularization term can be found in Algorithm \ref{prox-SGD} and Algorithm \ref{RDA}.
\begin{algorithm}[t]
	\caption{Prox-SGD (Directly applying $\ell_1$ to SGD)}
	\label{prox-SGD}
	\begin{algorithmic}
		\STATE {\bfseries Input:} a dataset $Z$ and a loss function $\frac{1}{n}\sum_{z \in Z} f(w,z) + \lambda \|w\|_1$ where $w$ is a vector of the weights.
		\STATE {\bfseries Initialization:} initialize $w_0$ with the standard method.
		\FOR{$t=1$ {\bfseries to} $T$}
		\STATE Select a mini-batch $X_t$ from the dataset. 
		\STATE Compute $g_t= \frac{1}{m}\sum_{z\in X_t}\nabla_wf(w_t, z)$.
		\STATE Update $w_{t+1}$ with Prox-SGD in element-wised form:
		\begin{equation}
		w_{t+1}^{(i)}=
		\begin{cases}
		w_t^{(i)}-\eta_t (g_t^{(i)} +\lambda), & w_t^{(i)}-\eta_t g_t^{(i)} > \eta_t \lambda,\\
		0, & |w_t^{(i)}-\eta_t g_t^{(i)}| \leq \eta_t \lambda,\\
		w_t^{(i)}-\eta_t (g_t^{(i)} -\lambda), & w_t^{(i)}-\eta_t g_t^{(i)} < -\eta_t \lambda,
		\end{cases}
		\end{equation}
		where where $i$ is the index of the elements.
		\ENDFOR
	\end{algorithmic}
\end{algorithm}

\begin{algorithm}[!htbp]
	\caption{RDA (Applying $\ell_1$ in a different way)}
	\label{RDA}
	\begin{algorithmic}
		\STATE {\bfseries Input:} a dataset $Z$ and a loss function $\frac{1}{n}\sum_{z \in Z} f(w,z) + \lambda \|w\|_1$ where $w$ is a vector of the weights.
		\STATE {\bfseries Initialization:}  randomly choose $w_1$ as introduced in \S\ref{sec_ini}, and set $\bar g_0=0$.
		\FOR{$t=1$ {\bfseries to} $T$}
		\STATE Select a mini-batch $X_t$ from the dataset. 
		\STATE Compute $g_t= \frac{1}{m}\sum_{z\in X_t}\nabla_wf(w_t, z)$.
		\STATE Update $$\bar g_{t}=\frac{t-1}{t}\bar{g}_{t-1} + \frac{1}{t} g_{t}.$$
		\STATE Update $w_{t+1}$ with RDA in element-wise form:
		\begin{equation}\label{formula:RDA}
		w_{t+1}^{(i)}=
		\begin{cases}
		- {\xi_t}(\bar g_t^{(i)} +\lambda), & \bar g_t^{(i)} <-\lambda,\\
		0, & |\bar g_t^{(i)}| \leq   \lambda,\\
		-{\xi_t} (\bar g_t^{(i)} -\lambda), &  \bar g_t^{(i)} > \lambda.
		\end{cases}
		\end{equation}
		where $i$ is the index of the elements.
		\ENDFOR
	\end{algorithmic}
\end{algorithm}

\subsection{Comparison of Prox-SGD and RDA}\label{sec:Comparsion}
The soft-thresholding of  Prox-SGD and RDA are quite different. 
{
\begin{itemize}
	\item In Algorithm \ref{RDA}, we have    
	\begin{equation}\label{eq:RDA_w}
	w_{t+1}^{(i)} = 0,~~ \text{if}~~ |\bar g_t^{(i)}| \leq  \lambda,
	\end{equation}
	where the criterion to zero out $w_{t+1}^{(i)}$ only depends on a constant $\lambda$. 
	\item In Algorithm \ref{prox-SGD}, we have
	\begin{equation}\label{eq:SGD_w}
		w_{t+1}^{(i)} = 0,~~ \text{if}~~ |w_t^{(i)}-\eta_t g_t^{(i)}| \leq \eta_t \lambda,
	\end{equation}
	where $\eta_t =  \frac{1}{\alpha\sqrt{t}}$, thus the criterion in this case depends on $ \frac{1}{\alpha\sqrt{t}} \lambda$, which approaches to $0$ as $t$ goes to infinity. 
\end{itemize}
Considering that $w_t^{(i)}-\eta_t g_t^{(i)}$ will converges to certain point which many not be zero in \eqref{eq:SGD_w}, 
we cannot expect significant sparsity in Algorithm~\ref{prox-SGD} since $\eta_t \lambda$ will approach to $0$. 
However, the right hand term (thresholding value) in \eqref{eq:RDA_w} will keep constant as in RDA, which may produce a better sparsity.
Similar discussions can also be found in \cite{xiao2010dual}.
}

Furthermore, from the formulation of the regularized problem \eqref{regularized opt problem}, one can see that there is a trade-off between the accuracy and the regularization term, which can be concluded as too large regularization term controlled by $\lambda$ can weaken the effect of the loss function. In other words, increasing regularization term $\lambda$ will decrease the accuracy of the model.
Thus, it is necessary to make use of an algorithm which can produce a sparse solution with small $\lambda$. As our analysis above shows, RDA has a good balance between sparsity and accuracy.


\section{Two techniques for RDA in CNN}\label{sec_add}
In this section, we introduce two techniques, the initialization and the adaptive sparse retraining method. The first one is essential for RDA to work, and the second one gives much improvement to the results given by RDA. 

\subsection{Initialization}\label{sec_ini}
In the original paper \cite{xiao2010dual}, {the theoretical analysis requires that $w_c = 0$ and $w_1 = \argmin_{w} \|w\|_1 = 0$ as an initialization. 
Such an initialization is also shown to work very well numerically for convex problem studied in \cite{xiao2010dual}.
Let us examine now how this initialization technique would work for a typical CNN model,
such as VGG~\cite{simonyan2014very}, ResNet~\cite{he2016deep} which we will test in this paper.
We note that a typical CNN model can be written as:
\begin{equation}\label{eq:CNN}
f(w;x) = S(W f_{\rm CNN}(\theta; x) + b ),
\end{equation}
where $S(y) = {\rm Softmax}(y) := \left(\frac{e^{y_i}}{\sum e^{y_i}}\right)$ and $f_{\rm CNN}(\theta; x)$ stands
for the main CNN structure except for the fully connected layer with Softmax. 
Let $w = \{W, b, \theta \}$, where $\{W, b\}$ are the parameters for last fully connected layer with Softmax and
$\theta$ represents all parameters in the main structure of CNN models. 
One simple but important observation is that
all those CNN models satisfying the following property
\begin{equation}\label{eq:CNN=0}
f_{\rm CNN}(0; x)  = 0,
\end{equation}
as long as the underly activation function satisfies 
\begin{equation}\label{eq:sigma}
\sigma(0) = 0.
\end{equation}
This property is satisfied by $\sigma(x)  = {\rm ReLU}(x) := \max\{0,x \}$.
That is to say, if all parameters are chosen as zero, the output of the main structure of a general CNN model will always equal to zero.
Thus, for a general CNN model $f(w;x)$ as in \eqref{eq:CNN} with property~\eqref{eq:CNN=0}, we will have
\begin{equation}
\begin{aligned}\label{key}
\left. \frac{\partial f^{(i)}(w; x)}{\partial W^{(j,k)}} \right |_{w = 0} &= \left.\frac{\partial}{\partial W^{(j,k)}}\left[ {S}( \sum_{p} W^{(i,p)} f^{(p)}_{\rm CNN}(0;x) + b^{(i)})  \right]\right|_{w=0}\\
&= {S}'(b^{(i)}) \delta_{ij} f^{(k)}_{\rm CNN}(0;x) = 0, \quad \forall i, j, k.
\end{aligned}
\end{equation}
That is to say
\begin{equation}\label{eq:fW=0}
\left. \frac{\partial f(w; x)}{\partial W} \right|_{w = 0} = 0.
\end{equation}
Furthermore, 
\begin{equation}
\begin{aligned}\label{key}
\left. \frac{\partial f^{(i)}(w; x)}{\partial \theta^{(j)}} \right |_{w = 0} &= \left.\frac{\partial}{\partial W^{(j,k)}}\left[ {S}( \sum_{p} W^{(i,p)} f^{(p)}_{\rm CNN}(\theta;x) + b^{(i)})  \right] \right|_{w=0}\\
&= {S}'(b^{(i)}) \sum_{p} W^{(i,k)} \left. \frac{\partial f^{(k)}_{\rm CNN}(\theta;x)}{\partial \theta^j} \right|_{w=0}= 0,
\end{aligned}
\end{equation}
for all $i,j$ as $W=0$.
That indicates that
\begin{equation}\label{eq:ftheta=0}
\left. \frac{\partial f(w; x)}{\partial \theta} \right|_{w = 0} = 0
\end{equation}
Considering the observations \eqref{eq:fW=0} and \eqref{eq:ftheta=0} for zero initialization in CNN, we have the next proposition.
\begin{proposition}
	The RDA method with $w_1 = 0$ cannot converge for CNN with activation function $\sigma$ satisfying $\sigma(0)=0$.
\end{proposition}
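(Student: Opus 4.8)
The plan is to show that, started at $w_1=0$, the RDA iterates are permanently trapped in the degenerate affine subspace $\{w=\{W,b,\theta\}:\ W=0,\ \theta=0\}$, on which the network in \eqref{eq:CNN} reduces to the \emph{constant} map $x\mapsto S(b)$. Once this is established, any limit point $w^\star$ of the iterates must have $W^\star=0$ and $\theta^\star=0$, i.e. a classifier that ignores its input and uses none of the CNN structure $f_{\rm CNN}$; such a point is not a solution of the training problem \eqref{regularized opt problem} in any meaningful sense, which is the content of the proposition.

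The first step is to sharpen the two observations \eqref{eq:fW=0} and \eqref{eq:ftheta=0}. Inspecting the computations that produce them, one sees that $\left.\frac{\partial f(w;x)}{\partial W}\right|=0$ only uses $f_{\rm CNN}(\theta;x)=0$, which by \eqref{eq:CNN=0}–\eqref{eq:sigma} holds whenever $\theta=0$ (and $\sigma(0)=0$), while $\left.\frac{\partial f(w;x)}{\partial \theta}\right|=0$ only uses $W=0$; neither needs $b=0$. Hence for \emph{every} $w$ with $W=0$ and $\theta=0$ and \emph{any} $b$, the $W$- and $\theta$-blocks of $\nabla_w f(w;x)$ vanish for all $x$, so the mini-batch gradient $g_t=\frac1m\sum_{z\in X_t}\nabla_w f(w_t,z)$ has vanishing $W$- and $\theta$-blocks whenever $W_t=0$ and $\theta_t=0$.

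The second step is a (strong) induction on $t$ with hypothesis $P(t):\ W_t=0,\ \theta_t=0$. The base case $P(1)$ is immediate since $w_1=0$. Assuming $P(1),\dots,P(t)$, the previous paragraph gives that $g_1,\dots,g_t$ all have zero $W$- and $\theta$-blocks; since $\bar g_t=\frac1t\sum_{\tau=1}^{t}g_\tau$ (equivalently via $\bar g_0=0$ and $\bar g_t=\frac{t-1}{t}\bar g_{t-1}+\frac1t g_t$), the same is true of $\bar g_t$. Feeding this into the element-wise RDA update \eqref{formula:RDA}: for every index $(i)$ in the $W$- or $\theta$-block we have $|\bar g_t^{(i)}|=0\le\lambda$, so the middle branch applies and $w_{t+1}^{(i)}=0$. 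Thus $W_{t+1}=0$ and $\theta_{t+1}=0$, i.e. $P(t+1)$ holds, and the induction is complete: $W_t=0$ and $\theta_t=0$ for all $t\ge1$, so only $b_t$ ever moves. Consequently $f(w_t;\cdot)\equiv S(b_t)$ is input-independent for every $t$, and any limit point inherits $W^\star=\theta^\star=0$, giving a trivial constant classifier that is not a solution of \eqref{regularized opt problem}.

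I expect the main obstacle to be making the phrase "cannot converge" precise, since the iterates do not in fact oscillate: on the reduced variable $b$, RDA is just RDA applied to the convex, coercive function $b\mapsto\frac1n\sum_{z}f(0,b,0;z)+\lambda\|b\|_1$, so by the analysis of \cite{xiao2010dual} the $b_t$ converge to its minimizer $b^\star$, and $(0,b^\star,0)$ is even a (spurious) subdifferential-stationary point of $\phi$ because at $W=0$ the vanishing smooth gradient is absorbed by $\lambda\,\partial\|W\|_1=\lambda[-1,1]$, and likewise in $\theta$. So the honest statement, and what the write-up must pin down, is that the iterates converge only to a \emph{degenerate} point at which the CNN is inactive; to exclude the trivial reading one should additionally exhibit a concrete CNN/dataset pair for which $(0,b^\star,0)$ is not a local minimizer of $\phi$, so that "cannot converge" genuinely means "cannot converge to a (locally) optimal solution of \eqref{regularized opt problem}."
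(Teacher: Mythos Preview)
Your proposal is correct and follows the same line as the paper: the paper's ``proof'' consists only of the two observations \eqref{eq:fW=0} and \eqref{eq:ftheta=0} immediately preceding the proposition, with no explicit induction through the RDA iterates and no discussion of what ``cannot converge'' means. Your argument supplies exactly the missing pieces.

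One point worth highlighting is your first step. The paper computes the vanishing of $\partial f/\partial W$ and $\partial f/\partial\theta$ only at the single point $w=0$ (including $b=0$), whereas you observe that these derivatives vanish whenever $W=0$ and $\theta=0$, for \emph{arbitrary} $b$. This sharpening is not cosmetic: since $\partial f/\partial b$ is generically nonzero at $w=0$ (for cross-entropy it is $S(0)-e_y\neq0$), the bias moves after the first RDA step, so the paper's literal computation at $w=0$ does not by itself justify the inductive step. Your version closes this gap.

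Your final paragraph is also well placed. The paper uses ``cannot converge'' informally; as you note, the iterates do converge in the ordinary sense (to a point $(0,b^\star,0)$), and that point is even first-order stationary for $\phi$ because the zero gradient in the $W$- and $\theta$-blocks lies in $\lambda\,\partial\|\cdot\|_1$. What the paper really means is ``cannot converge to anything other than the trivial constant classifier,'' and your suggestion to make this precise by exhibiting that $(0,b^\star,0)$ is not a local minimizer on a concrete instance is the right way to turn the proposition into a fully rigorous statement.
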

}

As a result, non-zero initialization is a necessary condition in {all gradient-based training algorithm including} RDA for CNN. 
{Thus we propose to initialize $w_1$ via some random strategies as discussed later in this subsection.}
Actually, this modification will not influence the convergence of the algorithm. As proven in Theorem \ref{Convergence_Modified_RDA}, the convergence rate for convex problems based on this modification is still $\mathcal O(\frac{1}{\sqrt t})$ when ${\xi_t}=\mathcal O(\sqrt{t})$.  
\begin{theorem}\label{Convergence_Modified_RDA}
	Assume the loss function $f(w,z)$ in the problem (\ref{regularized opt problem}) is convex and there exists an optimal solution $w^{\star}$ to the problem (\ref{regularized opt problem}) with $\Psi(w)=\lambda\|w\|_1$ that satisfies $\frac{1}{2}\|w^{\star}\|_{2}^2\leq D^2$ for some $D>0$. {In addition, we assume that we have the next bound for the randomly chosen $w_1$:
	\begin{equation}\label{eq:conditionQ}
	\Psi(w_1)=\lambda \|w_1\|_1 \leq Q.
	\end{equation}}
	 Let the sequences $\{w_t\}_{t\geq 1}$ be generated by Algorithm \ref{RDA}, and assume {$\|g_t\|_{2}\leq G$} for some constant $G$. Then the expected cost $\mathbf{E}\phi(\bar{w}_t)$ converges to $\phi^{\star}$ with rate $\mathcal O(\frac{1}{\sqrt{t}})$
	\begin{equation}\label{eq:convergence}
	\mathbf{E}\phi(\bar{w}_t)-\phi^{\star}= \mathcal O(\frac{1}{\sqrt{t}}),
	\end{equation}
\end{theorem}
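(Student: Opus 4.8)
The plan is to reduce the claim to the classical dual averaging regret bound of \cite{xiao2010dual}, the only genuinely new element being the bookkeeping for a nonzero, randomly chosen initial point $w_1$. Write $F(w)=\frac1n\sum_{z\in Z}f(w,z)$ and $\Psi(w)=\lambda\|w\|_1$, so that $\phi=F+\Psi$; let $\mathcal F_{t-1}$ be the $\sigma$-algebra generated by $w_1$ and the mini-batches $X_1,\dots,X_{t-1}$; and let $\bar w_t=\frac1t\sum_{\tau=1}^t w_\tau$ be the running average of the iterates. Convexity of each $f(\cdot,z)$ makes $F$ convex, and uniform sampling of the mini-batch gives $\mathbf E[g_t\mid\mathcal F_{t-1}]=\nabla F(w_t)$ with $w_t$ being $\mathcal F_{t-1}$-measurable.

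\emph{Step 1 (online-to-batch).} Convexity of $F$ gives $F(w_\tau)-F(w^\star)\le\langle\nabla F(w_\tau),w_\tau-w^\star\rangle$; taking expectations and using the tower property replaces the right-hand side by $\mathbf E\langle g_\tau,w_\tau-w^\star\rangle$. Hence, with the online regret $R_t:=\sum_{\tau=1}^t\bigl(\langle g_\tau,w_\tau\rangle+\Psi(w_\tau)\bigr)-\sum_{\tau=1}^t\bigl(\langle g_\tau,w^\star\rangle+\Psi(w^\star)\bigr)$, one obtains $\sum_{\tau=1}^t\mathbf E[\phi(w_\tau)-\phi^\star]\le\mathbf E[R_t]$, and then Jensen's inequality for the convex function $\phi$ gives $t\,\mathbf E[\phi(\bar w_t)-\phi^\star]\le\mathbf E[R_t]$. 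It therefore suffices to establish $\mathbf E[R_t]=\mathcal O(\sqrt t)$.

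\emph{Step 2 (regret bound and parameters).} Multiplying the subproblem \eqref{RDA subproblem} by $t$ exhibits the iteration as a follow-the-regularized-leader recursion, $w_{t+1}=\argmin_w\bigl\{\sum_{\tau=1}^t(\langle g_\tau,w\rangle+\Psi(w))+\beta_t h(w)\bigr\}$, with $h(w)=\frac12\|w\|_2^2$ and $\beta_t=t/\xi_t=\alpha\sqrt t$ nonnegative and nondecreasing. The dual averaging lemma of \cite{xiao2010dual} --- whose proof uses that $\beta_{t-1}h$ is $\beta_{t-1}$-strongly convex to control the per-step gap $\ell_t(w_t)-\ell_t(w_{t+1})$, where $\ell_t(w)=\langle g_t,w\rangle+\Psi(w)$ --- then yields
\[
R_t\ \le\ \beta_t\,h(w^\star)\;+\;\frac12\sum_{\tau=2}^{t}\frac{\|g_\tau\|_2^2}{\beta_{\tau-1}}\;+\;\bigl(\ell_1(w_1)-\ell_1(w_2)\bigr).
\]
Now $\beta_t h(w^\star)=\alpha\sqrt t\cdot\tfrac12\|w^\star\|_2^2\le\alpha D^2\sqrt t$; the middle sum is at most $\frac{G^2}{2\alpha}\sum_{\tau=2}^t(\tau-1)^{-1/2}\le\frac{G^2}{\alpha}\sqrt t$ using $\|g_\tau\|_2\le G$; and the first-step term obeys $\ell_1(w_1)-\ell_1(w_2)\le\langle g_1,w_1-w_2\rangle+\Psi(w_1)\le G\bigl(\|w_1\|_2+\|w_2\|_2\bigr)+Q$, where $\|w_2\|_2\le\xi_1\|g_1\|_2\le G/\alpha$ from the closed form \eqref{formula:RDA} and $\|w_1\|_2\le\|w_1\|_1\le Q/\lambda$ by \eqref{eq:conditionQ}, so this term is a constant independent of $t$. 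Collecting, $\mathbf E[R_t]\le C_1\sqrt t+C_0$ with $C_1=C_1(\alpha,D,G)$ and $C_0=C_0(\alpha,G,\lambda,Q)$; dividing by $t$ and invoking Step 1 gives $\mathbf E[\phi(\bar w_t)]-\phi^\star=\mathcal O(1/\sqrt t)$, which is the assertion.

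\emph{Main obstacle.} The only place the argument genuinely departs from \cite{xiao2010dual} is the initial iterate. In the original analysis one takes $w_1=\argmin_w\Psi(w)=0$, so that $\ell_1(w_1)=0$ and the first-step term above reduces to $-\ell_1(w_2)\le G/\alpha$; this is exactly where the choice $w_1=0$ is used, and it is precisely the choice that the preceding proposition forbids for CNN. With an arbitrary $w_1$ this boundary term reappears, and the real work is to verify (i) that it is indeed captured by $\ell_1(w_1)-\ell_1(w_2)\le\Psi(w_1)+\mathrm{const}$ in the base case of Xiao's inductive lemma, and (ii) that the a priori control \eqref{eq:conditionQ}, equivalently $\|w_1\|_2\le Q/\lambda$, keeps it $t$-independent so that it vanishes after the division by $t$ in Step 1. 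A minor related point is the usual $\beta_0=0$ convention: the generic per-step estimate $\|g_\tau\|_2^2/(2\beta_{\tau-1})$ is vacuous at $\tau=1$, which is why the $\tau=1$ gap must be peeled off and bounded by hand using the closed-form expressions for $w_1,w_2$. Everything else --- the strong-convexity estimate for the regularized-leader gap, the telescoping, the balancing choice $\xi_t=\Theta(\sqrt t)$ of the two $\mathcal O(\sqrt t)$ contributions, and the online-to-batch passage --- is identical to the convex theory, so the $\mathcal O(1/\sqrt t)$ rate is unaffected by the nonzero random initialization.
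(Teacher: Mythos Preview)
Your proposal is correct and follows precisely the route the paper indicates: rerun Xiao's dual-averaging regret analysis and absorb the effect of the nonzero random $w_1$ into a $Q$-dependent additive constant that leaves the $\mathcal O(1/\sqrt t)$ rate intact. In fact the paper gives no detailed proof at all---it merely remarks that one can follow \cite{xiao2010dual} with an extra coefficient related to $Q$---so your explicit peeling of the $\tau=1$ gap $\ell_1(w_1)-\ell_1(w_2)$ and its control via \eqref{eq:conditionQ} and the closed form \eqref{formula:RDA} supplies exactly the step the paper leaves to the reader.
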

with $\bar w_t=\frac{1}{t} \sum_{\tau=1}^{t} w_\tau$ and $\phi^{\star}=\phi(w^{\star})$. \\
%

{
By adding the extra assumption that for the bound of $w_1$ as in \eqref{eq:conditionQ}, 
we can then prove the above result by following Xiao's work in \cite{xiao2010dual} with difference of an extra coefficient in $\mathcal O(\frac{1}{\sqrt{t}})$ which
is related to $Q$. 
The only difference between the original RDA and RDA used in Algorithm \ref{RDA} is that the former one takes $w_1=\argmin_w \|w\|_1 = 0$ as initialization whereas the latter one allows us to chooses $w_1$ randomly. 
This is a small modification in algorithm and proof but it plays a crucial role in applying RDA to CNN as discussed in the beginning of this section.
}

In particular, when the activation function is ReLU, the weights in CNN are usually initialized with a uniform or a normal distribution \cite{lecun2012efficient,glorot2010understanding,he2015delving}. For RDA, we propose to initialize the weights with a uniform distribution $\mathcal{U}(-b,b)$, where 
\begin{equation}
b=\sqrt{\frac{s}{n}}.
\end{equation}
For a convolutional layer, $n=k^2 c$ is the size of the filter, where $c$ is the number of input channels and $k$ is the width of the filter. For a fully connected layer, $n$ is the dimension of the input vector. In both cases, $s$ is a scalar to increase the weights (e.g. \cite{he2015delving} proposes to choose $s=6$). 

\begin{table}[tb]
	\caption{Different initialization scalars on ResNet-18, CIFAR-10 with RDA. This table shows TOP-1 and TOP-5 accuracy on validation dataset. All models are trained for 120 epochs. {({\bf TOP-1 accuracy} is the conventional accuracy, which means that the model answer (the one with the highest probability) must be exactly the expected answer. {\bf TOP-5 accuracy} means that any of your model that gives 5 highest probability answers that must match the expected answer.)}}
	\label{table:ini_CIFAR10}
	\vskip 0.15in
	\begin{center}
		\begin{small}
				\begin{tabular}{lccccr}
					\toprule
					$\sqrt{s}$ &  TOP-1 &  TOP-5 &  Sparsity\\
					\midrule
					1, 2    &  10.00 &  50.00 &  N/A      \\
					3      &  85.52 &  99.24 &  0.98    \\
					4      &  86.72 &  99.45 &  0.97     \\
					5      &  90.03 &  99.44 &  0.95     \\
					10     &  90.67 &  89.50 &  0.94    \\
					100    &  \bf 91.41 &  99.58 &  0.84 \\
					1000   &  90.36 &  99.62 &  0.63 \\
					10000  &  71.80 &  97.94 &  0.34 \\
					20000  &  68.06 &  97.39 &  0.99 \\
					\bottomrule
				\end{tabular}
		\end{small}
	\end{center}		
\end{table}

\begin{table}[tb]
	\caption{Different initialization scalars on ResNet-18, CIFAR-100 with RDA. 
		This table shows TOP-1 and TOP-5 accuracy on validation dataset. All models are trained for 120 epochs.}
	\label{table:ini_CIFAR100}
	\vskip 0.15in
	\begin{center}
		\begin{small}
				\begin{tabular}{lccccr}
					\toprule
					$\sqrt{s}$ &  TOP-1 & TOP-5  &  Sparsity \\ 
					\midrule
					1      &  63.67 &  87.85 &  0.91 \\
					2      &  \bf66.90 &  88.53 & 0.60  \\
					5      &  65.47 &  88.09 &  0.60 \\
					10     &  65.54 &  88.21 &  0.42 \\
					15     &  64.22 &  87.53 &  0.43 \\
					25     &  63.06 &  88.10 &  0.50 \\
					30     &  62.75 &  86.80 &  0.42 \\
					50     &  64.48 &  87.14 &  0.38   \\
					100    &  60.00 &  86.14 &  0.36  \\
					\bottomrule
				\end{tabular}
		\end{small}
	\end{center}
\end{table}

Since $f$ is non-linear, the effect of initialization on $g_1$, the gradient of $w_1$, is not that clear. Assuming that $f$ is a linear function, then $g_1$ is scaled in the same way as $w_1$. Since with a thresholding (ignoring the initial learning rate $\eta_1=1$), $g_1$ becomes the value of $w_2$, the initial value should not be too small, nor should it be too large because of the exploding gradient problem \cite{pascanu2012understanding}, as shown in Table \ref{table:ini_CIFAR10} and Table \ref{table:ini_CIFAR100}. {Here we have the next definition for sparsity of CNN models:
\begin{equation*}
\text{Sparsity}=\frac{\text{the number of zero weights}}{\text{the number of all weights}},
\end{equation*}
for all tables referred later. }

{Finally}, we listed some good choices for $s$ in Table \ref{table:goods}.

\begin{table}[tb]
	\caption{Suitable $\sqrt{s}$ for different models and datasets. ImageNet represents ILSVRC2012.
	}
	\label{table:goods}
	\vskip 0.15in
	\begin{center}
		\begin{small}
				\begin{tabular}{lcr}
					\toprule
					Dataset & Model & $\sqrt{s}$ \\
					\midrule
					CIFAR-10 & ResNet-18 & 10  \\
					& VGG-16bn & 20 \\
					& VGG-19bn & 10 \\
					CIFAR-100& ResNet-18 & 2  \\
					& VGG-16bn & 60 \\
					& VGG-19bn & 40 \\
					ImageNet & ResNet-18 & 2 \\
					\bottomrule
				\end{tabular}
		\end{small}
	\end{center}		
\end{table}

\subsection{Adaptive sparse retraining (ASR)}\label{sec:ASR}
Fine-tuning is a widely used technique that retrains a pruned model, since the pruning method often decreases the accuracy. This is equivalent to fix the weights to be pruned to zero in the original model, and only update the remaining weights.

During our retraining step, we fix the zero weights and update the remaining weights. If there are newly trained zero weights, they will also be fixed. Thus, in retraining, once a weight becomes zero, it will never be updated. This can be viewed as a stronger fine-tuning, and we call this method adaptive sparse retraining, where the optimization method we use is the same as that used in the first phase. This technique helps improve both the accuracy and sparsity of a model, as shown in Table \ref{table:iteretrain}.
\begin{table}[!htb]
	\caption{ASR helps improve both the sparsity and the accuracy. This table shows TOP-1 accuracy on validation dataset, and the sparsity of weights. The dataset is CIFAR-10.}
	\label{table:iteretrain}
	\vskip 0.15in
	\begin{center}
		\begin{small}
				\begin{tabular}{lcccr}
					\toprule
					& \multicolumn{2}{c}{RDA} & \multicolumn{2}{c}{RDA (ASR)} \\
					Model & TOP-1 & Sparsity & TOP-1 & Sparsity\\
					\midrule
					ResNet-18 & 91.34 & 0.87 & 93.47 & 0.95 \\
					VGG-16bn & 93.07 & 0.92 & 93.24 & 0.94 \\
					VGG-19bn & 92.65 & 0.74 & 93.02 & 0.90 \\
					\bottomrule
				\end{tabular}
		\end{small}
	\end{center}		
\end{table}

\section{Experiments}\label{sec_num}
In this section, we compare the results of RDA and other methods. All results of RDA are based on the two techniques introduced in \S\ref{sec_add}. All accuracies are of the validation dataset. 
The implementation is listed as follows. 

All experiments are carried out with PyTorch (pytorch.org){on TITAN V GPU}.
{For Prox-SGD, we use the same strategy with SGD for initialization and take learning rate as in Algorithm~\ref{prox-SGD}. The total epoch number for Prox-SGD as reported in Table~\ref{table:compare_RDA_p} is 120. This number of epochs is reasonable because, first, the accuracy reaches the highest point in the end, and second, due to the decreasing threshold of Prox-SGD, there should not be too many training epochs, otherwise there will be no sparsity in the end as discussed in \S~\ref{sec:Comparsion}.}

For RDA, filters, and weights as well as bias in fully connected layers, are initialized with uniform distribution introduced in \S\ref{sec_ini}. Weights in batch normalization are initialized with default settings in PyTorch (the mean is set to a 0-vector, and the variant is set to a 1-vector).
In all experiments, the training mini-batch size is 128 \footnote{In the original paper \cite{xiao2010dual}, RDA is proposed as an online learning algorithm, which takes one input at each time.}. Models are all first trained by RDA for 2400 epochs, and then RDA with ASR for 1200 epochs. {Furthermore, in Section \ref{sec:discussion}, we have reduced the number of epochs to 300 on CIFAR-10 and CIFAR-100 with ResNet-18 by tuning the parameter $\alpha$ and $\lambda$ }.

ResNet-18 is based on \cite{he2016deep}. VGG-16bn and VGG-19bn are based on \cite{simonyan2014very}, and both are implemented with batch normalization.

\subsection{Numerical results}
\begin{figure}[t]
	\vskip 0.2in
	\begin{center}
		\centerline{\includegraphics[width=\columnwidth]{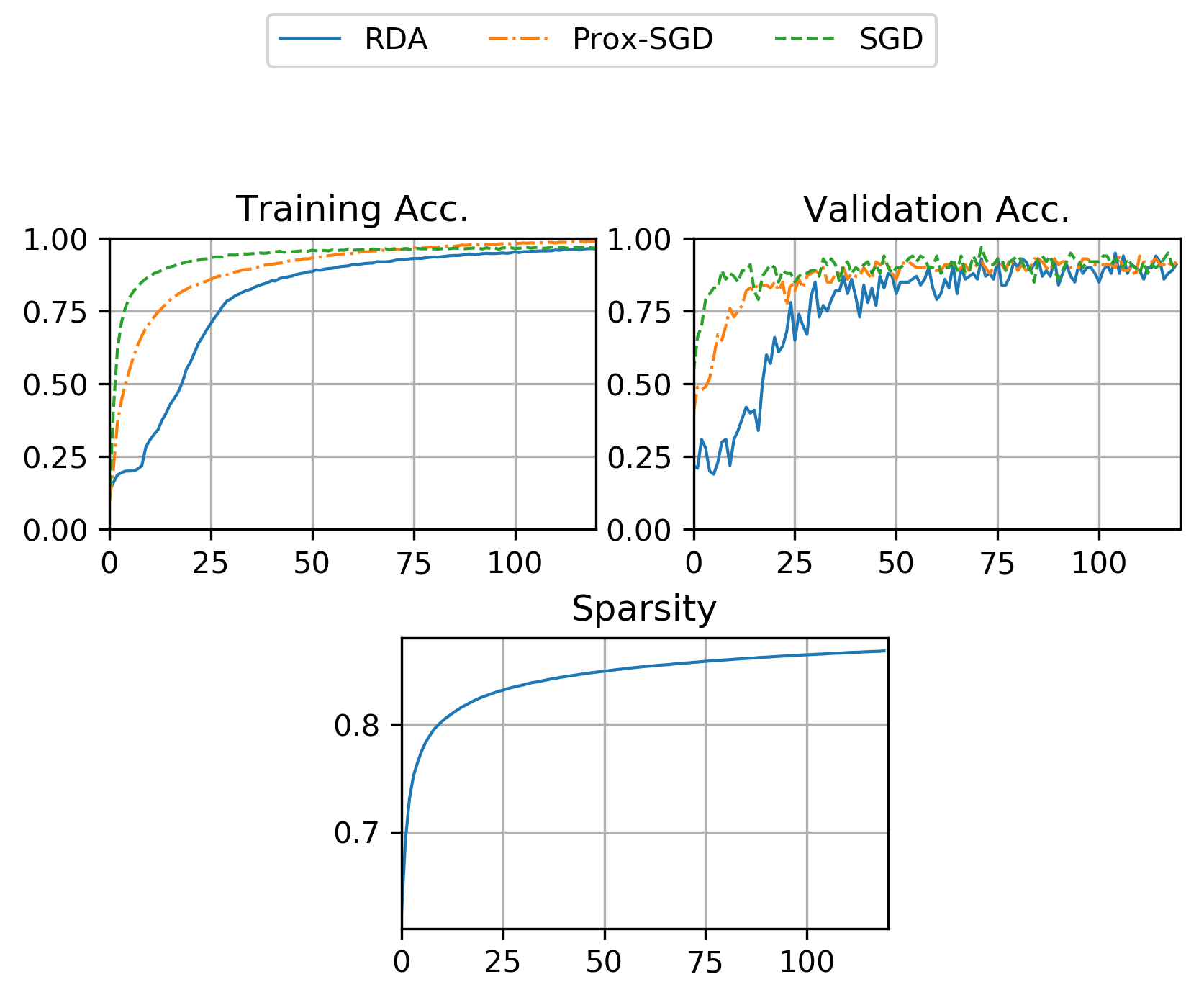}}
		\caption{An example of the first 120 epochs of loss and accuracy curves for different methods, and the sparsity curve of RDA on ResNet-18, CIFAR-10.}
		\label{fig:loss_sparsity}
	\end{center}
	\vskip -0.2in
\end{figure}

\begin{table}[!htb]
	\caption{Compare RDA and prox-SGD for ResNet-18 on CIFAR-10 {with 120 epochs}. RDA achieves better accuracy and sparsity.}
	\label{table:compare_RDA_p}
	\vskip 0.15in
	\begin{center}
		\begin{small}
				\begin{tabular}{lccccr}
					\toprule
					Method &  TOP-1 & TOP-5 & $\lambda$ & $\alpha$ & Sparsity \\
					\midrule
					prox-SGD  	  & 89.80 & 99.40 & $10^{-5}$ & 0.8 & $0.03$  \\ 
					RDA    	  & \bf {91.41} &	99.69 & $10^{-6}$ & 1.0 &	\bf {0.84}    \\ 				
					\bottomrule
				\end{tabular}
		\end{small}
	\end{center}
\end{table}
We first compare RDA and prox-SGD for ResNet-18 on CIFAR-10 {with both 120 epochs} as shown in Table \ref{table:compare_RDA_p}. One can see that RDA performs much better than prox-SGD, and achieves a sparsity of $95\%$. We have analyzed why RDA could be better than prox-SGD in \S \ref{sec_alg}, and the experiments support our claim.

\begin{table}[!htb]
	\caption{RDA on different models and CIFAR-10. RDA works well on different CNN models.}
	\label{table:DiffModel}
	\vskip 0.15in
	\begin{center}
		\begin{small}
				\begin{tabular}{l ccccr}
					\toprule
					MODEL  &   TOP-1 & TOP-5 & $\lambda$ & $\alpha$ & Sparsity    \\ 
					\midrule
					ResNet-18  & 93.47  &	99.69   & $10^{-6}$ & 1.0 & $0.95$  \\
					VGG-16bn  	  & 93.24  & 99.52    & $10^{-6}$ & 1.0 & $0.94$\\
					VGG-19bn     & 93.02  &	99.34     & $10^{-5}$ & 1.0 & $0.90$ \\
					\bottomrule
				\end{tabular}
		\end{small}
	\end{center}
\end{table}
For RDA itself, we show the results on ResNet-18, VGG-16bn and VGG-19bn, CIFAR-10 in Table \ref{table:DiffModel}. One can see that RDA performs well on all models tested. In \S\ref{sec_alg}, we have shown that SDA is a perturbation of SGD, and based on SDA, RDA keeps its general optimization ability on different models.

\begin{table}[!htb]
	\caption{RDA on ResNet-18 and different datasets. RDA works well on different datasets.}
	\label{table:DiffDataset}
	\vskip 0.15in
	\begin{center}
		\begin{small}
				\begin{tabular}{l ccccr}
					\toprule
					Dataset &   TOP-1 &   TOP-5  & $\lambda$ & $\alpha$ & Sparsity \\ 
					\midrule
					MNIST   & 99.63 & 100.00  & $10^{-6}$ & $0.1$ & 0.95   \\
					CIFAR-10 & 93.47  & 99.69  &	 $10^{-6}$ & 1.0  &0.95  \\
					CIFAR-100 & 72.29  & 89.94  & $10^{-8}$ & 0.09  & 0.56  \\
					ImageNet & 64.93  & 84.92  & $10^{-8}$ & $0.1$ & 0.36\\ 
					\bottomrule
				\end{tabular}
		\end{small}
	\end{center}
\end{table}
Table \ref{table:DiffDataset} shows the results of ResNet-18 on CIFAR-10, CIFAR-100 and ImageNet (ILSVRC2012). In general, RDA performs well on different datasets. For ImageNet, {the typical accuracy of SGD for ResNet should be around 69\%}. In some sense, ResNet-18 could lack the redundancy to be sparse while maintaining satisfactory accuracy. A larger model may help improve the performance.


\begin{table}[tb]
	\caption{To compare RDA with the three-step pipeline in \cite{han2015learning}, {we adapt the implementation in \cite{liu2018rethinking}} where the model is first trained by SGD {with 160 epochs}, then pruned according to the sparsity, and finally fine tuned {with 40 epochs} to retrieve the performance (denoted as Model (Han)).  The results of RDA are comparable to \cite{han2015learning}.}
	\label{table:compare_to_han}
	\vskip 0.15in
	\begin{center}
		\begin{small}
				\begin{tabular}{llcr}
					\toprule
					Dataset & Model & TOP-1 & Sparsity\\
					\midrule
					CIFAR-10 
					& ResNet-18 (RDA) & 93.47 & 0.95 \\
					& ResNet-18 (Han) & 93.95 & 0.95 \\
					& VGG-16bn (RDA) & 93.24 & 0.94 \\ 
					& VGG-16bn (Han) & 93.55 & 0.94 \\ 
					& VGG-19bn (RDA) & 93.02 & 0.90 \\ 
					& VGG-19bn (Han) & 93.60 & 0.90 \\
					CIFAR-100
					& ResNet-18 (RDA)& 72.29 & 0.56 \\
					& ResNet-18 (Han) & 74.67 & 0.56  \\
					& VGG-16bn (RDA) & 69.04 & 0.67 \\ 
					& VGG-16bn (Han) & 73.56 & 0.67 \\
					& VGG-19bn (RDA) & 67.46 & 0.48\\ 
					& VGG-19bn (Han) & 72.52 & 0.48 \\
					\bottomrule
				\end{tabular}
		\end{small}
	\end{center}		
\end{table}

Table \ref{table:compare_to_han} compares RDA with the three-step pipeline in \cite{han2015learning}, based on the source code provided by \cite{liu2018rethinking}.
\cite{han2015learning} proposes the pipeline to compress CNN models, where the first step is training a model, the second is pruning a given percentage of weights in the trained model, and the third is fine tuning it. We compare the two methods based on the same sparsity, i.e. if a model trained by RDA has sparsity 0.95, then the model trained by SGD will be pruned $95\%$ weights and then fine tuned. One can see that the results of RDA are comparable to \cite{han2015learning}. This shows RDA is a powerful sparse optimization method for CNN.
\subsection{Additional heuristic techniques}\label{sec:discussion}
{ The numerical results presented in the previous subsections
	show that RDA works well in CNN. Next, we present some heuristic
	techniques that help improve the performance of RDA.  In training
	algorithms like SGD and RDA, when the iteration step $t$ gets large,
	the learning rate becomes too small to lead to any significant
	update of the weights at each step.  In order to solve this problem,
	we developed some heuristic strategy for parameter turning for RDA.
	For example, we modify the parameter $\alpha$ and $\lambda$ after
	training appropriate number of epochs, which can speed up the training process
	significantly according to our investigation on CIFAR-10 and CIFAR-100
	with ResNet-18 shown in Table \ref{table:AdaptiveRDACIFAR-10} and
	Table \ref{table:AdaptiveRDACIFAR-100}.
	{ By trial and error, $\{10^{-5},10^{-6},10^{-7}\}$ is a suitable search space for the parameter $\lambda$, and for the parameter $\alpha$, it should be decreasing during the training process.}
	These numerical experiments reveal the possibility to speed up and
	improve RDA with suitable adaptive parameter strategies. How to
	automatically find a proper adaptive parameter on different datasets
	and networks by theoretical analysis and more parameters tuning is
	still under further investigation.  }
\begin{table}[!htb]
	{
		\caption{RDA with adaptive $\alpha$ helps speed up the training process. The dataset is CIFAR-10 and the network is ResNet-18.}
		\label{table:AdaptiveRDACIFAR-10}
		\vskip 0.15in
		\begin{center}
			\begin{small}
				\begin{tabular}{lcccr}
					\toprule
					Epochs  & $\alpha$  & $\lambda$ & TOP-1 & Sparsity\\
					\midrule
					$[~~~1,100]$ & 1    &  $10^{-5}$ & 92.19 & 0.9257 \\
					$[101,200]$  & 0.2  &  $10^{-5}$ & 93.38 & 0.9422 \\
					$[201,300]$  & 0.05 &  $10^{-5}$ & 93.13 & 0.9645 \\
					\bottomrule
				\end{tabular}
			\end{small}
		\end{center}	
	}	
\end{table}

\begin{table}[!htb]
	{
		\caption{RDA with adaptive $\alpha$ helps speed up the training process. The dataset is CIFAR-100 and the network is ResNet-18.}
		\label{table:AdaptiveRDACIFAR-100}
		\vskip 0.15in
		\begin{center}
			\begin{small}
				\begin{tabular}{lcccr}
					\toprule
					Epochs  & $\alpha$  & $\lambda$ & TOP-1 & Sparsity\\
					\midrule
					$[~~~1,100]$ & 0.28    &  $10^{-6}$ & 68.2  & 0.7455 \\
					$[101,200]$  & 0.21    &  $10^{-7}$ & 71.25 & 0.6842 \\
					$[201,300]$  & 0.08    &  $10^{-6}$ & 72.67 & 0.7782 \\
					\bottomrule
				\end{tabular}
			\end{small}
		\end{center}
	}		
\end{table}
%
%

\section{Concluding remarks}\label{sec_conc}

In contrary to the common perception that the SDA method of
\cite{nesterov2009primal} should only work for convex optimization
problem for which the SDA was originally designed, in this paper, we
manage to make this method as an effective training algorithm for the
highly non-convex CNN. In particular, by combining it with
$\ell_1$ regularization, we develop the corresponding RDA method that
proves to be very effective to obtain sparse CNN models without
compromising generalization accuracy.  The theoretical foundation of
this approach is based on a critical observation we make, namely the
SDA method (with a slight modification) is equivalent to a small
perturbation of the SGD method if the learning rate is chosen
appropriately. While our work is motivated by \cite{xiao2010dual} for
convex optimization problem, we find that the effectiveness of our RDA
method depend crucially on proper initialization and adaptive sparse
retraining.  Preliminary numerical experiments show that our new
method can be used to train sparse CNN with performances comparable to
the state-of-the-art weight pruning methods \cite{han2015learning}.
We further provide theoretical justification of this method for convex
optimization problems and analysis of the effectiveness of different
choices of hyper-parameters in the algorithm.

\newpage
\bibliographystyle{plainnat}
\bibliography{RDA_COAP}



\end{document}